\documentclass{article} 
\usepackage{iclr2027_conference,times}

\usepackage[hypertexnames=false]{hyperref}
\usepackage{url}
\usepackage{amsmath,amssymb,amsthm}
\usepackage{booktabs}
\usepackage{tabularx}
\usepackage{enumitem}      
\usepackage{algorithm}
\usepackage{algpseudocode}
\usepackage[capitalize,noabbrev]{cleveref}
\usepackage{placeins}   

\theoremstyle{plain}
\newtheorem{theorem}{Theorem}
\newtheorem{proposition}{Proposition}
\theoremstyle{definition}

\newcommand{\Esig}{E^{\mathrm{sig}}}
\newcommand{\Emech}{E^{\mathrm{mech}}}
\newcommand{\Ejx}[1]{E_j^{\mathrm{#1}}}
\newcommand{\Ex}{\mathbb{E}}
\newcommand{\Prob}{\mathbb{P}}

\title{N$\times$N E-valuation:\\ Hypothesis Certification via a Conformal CRT Null}

\author{Bin Wang, Yan Zhong, Liang Luo, Buyun Zhang \& Ellie Wen \\
Meta \\
\texttt{\{binwang88,yzhong36,liangluo,buyunz,ellie.wen\}@meta.com}}

\iclrfinalcopy 
\begin{document}
\maketitle
\lhead{} 
\begin{abstract}
We propose \textbf{N$\times$N E-valuation}, a handy, e-value-based hypothesis-certification algorithm that lets a hypothesis be verified without building any case-specific certification procedure---such as constructing a dedicated null hypothesis---as long as a large enough dataset is available. The method is especially suited to LLM-based exploration systems, where LLMs are remarkably good at \emph{proposing} hypotheses but suffer badly from hallucination; this hallucination prevents us from harvesting LLM outputs directly, and existing remedies each fall short. The most common solutions include letting the LLM verify or correct itself (circular verification \citep{huang2024cannot,kamoi2024when}) and held-out testing (which false hypotheses can still pass via spurious correlations \citep{ye2024cleverhans}), among other remedies detailed in the introduction. To resolve this, N$\times$N E-valuation exploits the naturally existing large training set and lets different samples serve as null hypotheses for one another. This design directly realizes a conditional randomization test (CRT) \citep{candes2018} that certifies each hypothesis. The approach can be a universally better replacement for at least LLM circular verification and held-out-data testing, provided the LLM's generations are hypotheses that apply to each individual sample.
\end{abstract}

\section{Introduction}
Automated systems---LLMs above all---now \emph{propose} hypotheses about data: an LLM abduces \emph{why} a user chose an item, a program synthesizer emits candidate predictors, a miner proposes subgroup rules. Proposal is no longer the bottleneck; \textbf{certification} is. LLMs are remarkably good at proposing yet hallucinate badly, which requires each proposal to be checked, and that check is frequently \emph{as hard as the original problem}, or demands bespoke, case-specific statistical machinery (a dedicated null hypothesis, a hand-built test) for every new proposal. We study this in general form: a stream of \emph{per-unit hypotheses} (e.g.\ ``a user who buys a phone buys the corresponding phone case''), each mapping a unit's input $x_i$ to a per-unit \emph{restriction} (e.g.\ for a user who just bought an iPhone, the restriction is ``the outcome contains an iPhone case'') scoring its held-out outcome $y_i$ (e.g.\ an iPhone~17 case, an iPhone~17 screen protector), to be certified against real data.

Existing remedies for this hallucination--verification problem each fall short; the growing use of LLMs and synthesizers to \emph{generate} hypotheses \citep{alkan2025hypgen} has only sharpened the need. \cref{tab:remedies} groups them by \emph{what establishes trust}.

\begin{table}[t]
\caption{Existing remedies, grouped by what establishes trust.}
\label{tab:remedies}
\centering
\scriptsize
\begin{tabularx}{\linewidth}{@{}>{\raggedright\arraybackslash}p{1.35cm}>{\hsize=0.88\hsize\raggedright\arraybackslash}X>{\hsize=1.12\hsize\raggedright\arraybackslash}X@{}}
\toprule
Trust rests on & Instances & Falls short because \\
\midrule
A model's judgement &
LLM self-verification or self-correction; a separate verifier or process-reward model
\citep{lightman2023verify,zhang2024genverifier} &
Circular: the judge shares the generator's failure modes \citep{huang2024cannot,kamoi2024when} \\
\midrule
An external oracle &
Formal deduction; knowledge-graph or domain-model grounding; retrieval with attribution; tool and
execution grounding \citep{alphaproof2025,song2025leancopilot,fallahpour2025bioreason,%
amayuelas2025kg,kang2023ever,zhao2026attribution,yao2023react,gao2023pal} &
Presupposes a domain-specific oracle; certifies formalizability or agreement with known knowledge,
not the truth of a novel claim \\
\midrule
Construction &
Grammar-constrained decoding \citep{geng2023grammar}; curated, format-restricted reasoning traces
\citep{onereason2026} &
Certifies \emph{form}, not \emph{truth}, and narrows what may be hypothesized \\
\midrule
Data &
Permutation and model-X CRT \citep{candes2018}, with sequential e-value variants
\citep{shaer2023modelx}; held-out validation \citep{majumder2024discoverybench}; expert audit &
Needs a hand-built null \emph{per hypothesis}; a data-suggested hypothesis needs its selection event
characterized \citep{markovic2017selective} or a proposer/certifier split, which we adopt. Held-out
prediction cannot separate a mechanism from a base rate or a shortcut
\citep{ye2024cleverhans,geirhos2020shortcut}; audit does not scale \\
\bottomrule
\end{tabularx}
\end{table}

What is missing across all of these is a \emph{general, automatic, distribution-free} certifier for arbitrary per-unit hypotheses---needing no case-specific null, and separating ``predicts'' from ``predicts for the right reason'': a genuine per-unit mechanism, or a real scope-level effect, as opposed to a base rate or a shared confound.

Three properties make this hard and rule out standard tooling:
\begin{enumerate}[leftmargin=*]
\item \textbf{Open, un-enumerable hypothesis space.} The hypotheses are being \emph{discovered}; a method needing a fixed feature schema or a hand-built null per hypothesis does not scale.
\item \textbf{Multi-round, data-dependent testing.} Hypotheses are proposed, refined, and re-tested over rounds; na\"ive p-value thresholding or a batch Benjamini--Hochberg pass either inflates false discoveries or shifts the threshold retroactively as the candidate set grows.
\item \textbf{Two distinct failure modes.} A hypothesis can (a) fail to beat a base rate (not \emph{significant}), or (b) predict well for the \emph{wrong reason}---a base rate, a prevalence effect, or a shared confound rather than a genuine \emph{mechanism}. Certification must rule out both.
\end{enumerate}

N$\times$N E-valuation solves these challenges with a large enough dataset, and it turns that dataset into the null: rather than hand-building a null per hypothesis, we let different samples serve as null hypotheses for one another. We form an \textbf{$N\times N$ cross-prediction matrix} whose entry $(i,j)$ scores unit $i$'s restriction on unit $j$'s held-out outcome. Its diagonal is each hypothesis applied to its own unit; its off-diagonal is the same restriction applied to \emph{other} units---a per-unit conditional randomization test (CRT) \citep{candes2018} whose null is \textbf{constructed automatically from the data} and read off as an exact e-value. Because e-values compose, the certificate stays \textbf{anytime-valid} \citep{ramdas2023,grunwald2024safe} as data and discovery rounds accumulate. The result is a drop-in, broadly applicable replacement for at least LLM self-verification and held-out-set testing, whenever the LLM's generations are hypotheses that apply to each individual sample.

The remainder of the paper is organized as follows. \cref{sec:setup} formalizes the certification problem and the running instantiation. \cref{sec:method} develops the method. \cref{sec:theory} establishes the guarantees with proofs --- e-value validity, the intersection--union error bound, and the identifiability of glocal effects from structural confounds. Anytime-validity is inherited from standard e-value composition (\cref{sec:bank}). \cref{sec:exp} reports the synthetic-validation experiments on a synthetic world with planted ground truth. Our certifier assigns the correct verdict $99\%$ of the time, whereas naive held-out predictive validation scores only $50\%$. \cref{sec:conclusion} concludes.

\section{Problem Formulation}\label{sec:setup}
\begin{table}[t]
\caption{Objects in the certification problem, with the running recommendation example.}
\label{tab:objects}
\centering
\footnotesize
\begin{tabularx}{\linewidth}{@{}ll>{\raggedright\arraybackslash}X>{\raggedright\arraybackslash}X@{}}
\toprule
Object & Notation & Meaning & Example \\
\midrule
Unit & $i$ & a member of the population & a user \\
Test set & $T$ & units $H$ is certified against ($U\subseteq T$); its rate over $T$ is the significance null & all users \\
Input & $x_i$ & what is known about a unit & interaction history (leave-last-out) \\
Outcome & $y_i$ & the held-out target & the next interaction \\
Hypothesis & $H=(\mathrm{in\_scope},\mathrm{apply})$ & a scope test plus $\mathrm{apply}$ & ``a phone-buyer buys a case'' \\
Scope & $\mathrm{in\_scope}(x)\in\{0,1\}$ & in-scope set $U=\{i:\mathrm{in\_scope}(x_i){=}1\}$; the $N\times N$ matrix is built on $U$ ($N=|U|$) & the history matches the premise \\
Apply & $r=\mathrm{apply}(x)$ & the \emph{high computation} step: reads the input, returns a restriction & compile the pattern into a per-user filter \\
Restriction & $r$, $r(y)\ge0$ & what $\mathrm{apply}$ returns; its \emph{cheap} score rates an outcome & how well $y$ fits the pattern \\
\bottomrule
\end{tabularx}
\end{table}

The core concepts used in the formulation are listed in \cref{tab:objects}. The setup fits most scenarios naturally: a hypothesis poses a condition (the scope) and predicts a characteristic for the subjects that satisfy it (the restriction). To certify a hypothesis $H$, two conditions must hold of its restrictions on held-out outcomes.

\emph{Significance.} The restrictions must predict the outcome beyond the population base rate---the lift is real, not a base-rate artifact.

\emph{Mechanism.} The lift must be \emph{unit-specific}---it depends on \emph{this} unit's input, rather than a unit-invariant effect that would hold for any in-scope unit.

We certify $H$ only when both hold, controlling false certifications across an open, multi-round stream, and without needing to name \emph{which} sub-structure of the input drives the lift. \cref{sec:method} operationalizes both criteria (\cref{alg:certify}); \cref{sec:theory} establishes the guarantees.

\section{Method: The \texorpdfstring{$N\times N$}{NxN} E-Value Matrix Certifier}\label{sec:method}
The algorithm scores each criterion with an \emph{e-value}~\citep{ramdas2023,grunwald2024safe} and certifies $H$ when both exceed the level-$\alpha$ threshold $1/\alpha$. Beyond the restriction scores $r_i(\cdot)$ (\cref{sec:setup}), it needs only the restriction's base rate over the test set $T$ as a null. \cref{alg:certify} gives the procedure; the subsections below define $\Esig$ and $\Emech$ and establish the guarantees.
\begin{algorithm}[t]
\caption{$N\times N$ E-valuation: certify hypothesis $H$ at level $\alpha$ (single evaluation; \cref{alg:banks} adds bank extension)}
\label{alg:certify}
\begin{algorithmic}[1]
\Require $H=(\mathrm{in\_scope},\mathrm{apply})$, data $\{(x_i,y_i)\}_{i\in T}$, level $\alpha$, thresholds $\tau_{\mathrm{div}},\tau_{\mathrm{cov}}$
\State $U \gets \{\, i\in T : \mathrm{in\_scope}(x_i)=1 \,\}$ \Comment{in-scope subset $U\subseteq T$}
\For{$i \in U$}
    \State $r_i \gets \mathrm{apply}(x_i)$ \Comment{restriction; high computation, once per unit ($O(|U|)$)}
\EndFor
\For{$i \in U,\ j \in U$}
    \State $M[i][j] \gets r_i(y_j)$ \Comment{cheap score; diagonal $M[i][i]=r_i(y_i)$ ($O(|U|^2)$)}
\EndFor
\State $\Esig \gets \tfrac{1}{|U|}\sum_{i\in U} r_i(y_i)\big/ g_0^{(r_i)}$ \Comment{significance: diagonal vs.\ base rate over $T$}
\State $\Emech \gets \tfrac{1}{|U|}\sum_{i\in U} r_i(y_i)\big/\big(\tfrac{1}{|U|}\sum_{j\in U} r_i(y_j)\big)$ \Comment{mechanism: diagonal vs.\ in-scope row mean}
\State $\mathrm{div} \gets \tfrac{1}{|U|}\sum_{j\in U}\sigma_j/\mu_j$ \Comment{restriction diversity (\cref{sec:diversity})}
\State $\mathrm{cover} \gets \tfrac{1}{|U|}\big|\{i\in U:\Ejx{sig}>1\}\big|$ \Comment{fraction of units the rule holds for (\cref{sec:coverage})}
\State \Return $\mathrm{verdict}(\Esig,\Emech,\mathrm{div},\mathrm{cover})$ per \cref{sec:decision} \Comment{unit-specific / glocal / global-prevalence / reject}
\end{algorithmic}
\end{algorithm}

\subsection{Two nulls}
\textbf{Significance null} $H_0^{\mathrm{sig}}$: under the restriction, an in-scope outcome is exchangeable with the population $T$---the restriction is satisfied no more by in-scope outcomes than across the test set as a whole.

\textbf{Mechanism null} $H_0^{\mathrm{mech}}$: conditional on the restrictions $\{r_i\}_{i\in U}$ and the outcome multiset $Y_U$, the within-scope pairing of restrictions to outcomes is exchangeable---the lift carries no \emph{unit-specific} information.

The mechanism null probes structure at the \emph{per-unit} granularity---it randomizes pairings \emph{within} $U$. A unit-invariant hypothesis---one whose restriction is constant over its scope $U$ (glocal or global-prevalence, \cref{sec:global})---has no such structure: the within-scope randomization is vacuous and $\Emech=1$. Yet it is not mechanism-free; its ``cause'' is scope membership itself, so the only mechanism-relevant contrast is whether in-scope outcomes exceed the population base rate---exactly what the significance randomization ($U$ against $T$) measures. The mechanism question then rises one level, from the unit to the scope: the mechanism e-value degrades to the significance e-value, and $\Esig$ serves both roles---certifying both that the lift is real and that it is attributable to the hypothesis's scope. A unit-invariant hypothesis is therefore judged on $\Esig$ alone---certified as \emph{glocal}, or flagged \emph{global-prevalence} when it merely sits at the base rate (\cref{sec:decision}).

\subsection{Significance e-value}\label{sec:sig}
Here $r(y)$ (\cref{sec:setup}) scores how well outcome $y$ satisfies the restriction. The significance baseline $g_0^{(r_j)}$ is the restriction's own rate over the whole test set $T$---how often it is satisfied across the population. For unit $j$ with $r_j=\mathrm{apply}(x_j)$, the significance e-value divides its score on the true outcome by that baseline,
\begin{equation}
\Ejx{sig} \;=\; \frac{r_j(y_j)}{g_0^{(r_j)}}, \qquad
g_0^{(r_j)} \;=\; \frac{1}{|T|}\sum_{k\in T} r_j(y_k).
\end{equation}
Since $j\in T$, $Y_T$ already contains $y_j$, which makes the e-value exact (as in $\Emech$). This is a conditional randomization test whose null pool is the population: does $r_j$ score $j$'s own outcome above a random unit's? A hypothesis with no lift over the base rate gets $\Esig\approx 1$; only signal beyond it accumulates
evidence.

\textbf{Zero baselines.} A restriction may score $0$ on every outcome in the pool it is compared
against, its own included. The ratio is then $0/0$---the unit's outcome is one of the pool members,
so a zero baseline forces a zero numerator---and we set
\begin{equation}\label{eq:degen}
\Ejx{sig}=\Ejx{mech}=1 \qquad\text{whenever the corresponding pool gives } r_j\equiv0 \text{ on it.}
\end{equation}
carrying no evidence either way. \cref{app:degen} shows this is the null value rather than an
arbitrary choice, and that it is routine for $\Emech$, whose pool is $U$, while unobserved for
$\Esig$, whose pool is $T$.

\subsection{Mechanism e-value: the \texorpdfstring{$N\times N$}{NxN} matrix and its conformal normalization}\label{sec:mech}
Over the claimed population (size $|U|$), compute each restriction \emph{once}, then form
\begin{equation}
M[i][j] \;=\; r_i(y_j) \qquad(\text{unit $i$'s restriction scored on unit $j$'s held-out outcome}).
\end{equation}
The diagonal is the correct pairing; off-diagonal entries apply a restriction to another unit's outcome. The per-unit \emph{conformal mechanism e-value} \citep{vovk2021} row-normalizes the diagonal by its row (the exchangeable pool):
\begin{equation}
\Ejx{mech} \;=\; \frac{M[j][j]}{\tfrac{1}{|U|}\sum_{k\in U} M[j][k]}.
\end{equation}
Intuition: does unit $j$'s restriction score $j$'s \emph{own} outcome above a random unit's? If not, the hypothesis is not unit-specific. This randomizes the \emph{pairing}, not the covariate, so unlike the model-X CRT \citep{candes2018} it needs no model of $P(X_j\mid X_{-j})$; \cref{sec:exp} compares the two.

\subsection{Diversity}\label{sec:diversity}
A third statistic, read off the same matrix, records whether different units receive \emph{different} restrictions. Let
\begin{equation}
\mu_j \;=\; \frac{1}{|U|}\sum_{i\in U} M[i][j], \qquad
\sigma_j \;=\; \sqrt{\frac{1}{|U|}\sum_{i\in U}\big(M[i][j]-\mu_j\big)^2}
\end{equation}
be the mean and standard deviation of column $j$: the spread of the scores that the \emph{different}
restrictions assign to one shared outcome. Diversity averages each column's coefficient of variation,
\begin{equation}\label{eq:div}
D \;=\; \frac{1}{|U|}\sum_{j\in U} \frac{\sigma_j}{\mu_j}.
\end{equation}
$D$ is not an e-value, but a facilitating metric that measures how different the restrictions are
between in-scope units, so that certifications can be routed to different verdicts
(\cref{sec:decision}). It works with $\Esig$ to separate a unit-invariant real hypothesis from a
structural confound (\cref{thm:posterior}), and carries no Type-I guarantee. $\tau_{\mathrm{div}}$ is
introduced for non-deterministic restriction functions: because the restriction output is
statistical, $D$ may not be $0$ as it would be in the ideal case, so some tolerance is needed. In the
rare case where every restriction scores $0$ on an outcome, $\mu_j=0$ and that column's
$\sigma_j/\mu_j=0/0$ is defined as $0$.

\subsection{Coverage}\label{sec:coverage}
A per-unit hypothesis asserts something of \emph{every} unit in its scope, so a fourth statistic
records for how many of them it actually holds:
\begin{equation}\label{eq:cover}
\mathrm{cover} \;=\; \frac{1}{|U|}\,\big|\{\, i\in U : \Ejx{sig}>1 \,\}\big|,
\end{equation}
the fraction of in-scope units whose own restriction beats its own population baseline on its own
outcome. Neither $\Esig$ nor $\Emech$ resolves Simpson's paradox cleanly, since neither traces where the
significance comes from or how it is distributed across units. Nor does $D$, which addresses
interchangeability rather than universality (\cref{app:div:magnitude}). Computing the portion of
units that contribute to the significance identifies the Simpson's case, where a small portion of
the sample carries a large effect; it also gives the proposer something to act on
(\cref{app:feedback}). Like $D$, $\mathrm{cover}$ is not an e-value and $\tau_{\mathrm{cov}}$ carries
no Type-I guarantee. It gates rather than certifies, and a hypothesis it turns away is one to
\emph{re-scope} and re-propose: a rule that fires on a minority of the population it claims is not a
stable predictor of any of them.

\subsection{Certification decision}\label{sec:decision}
Aggregate the per-unit e-values by their mean, itself an e-value under arbitrary dependence
(\cref{sec:theory}); a product would instead require independence, which is why independent
\emph{banks}, not units, are combined multiplicatively (\cref{sec:bank}):
\begin{equation}\label{eq:agg}
\Esig=\frac{1}{|U|}\sum_{j\in U} \Ejx{sig}, \qquad \Emech=\frac{1}{|U|}\sum_{j\in U} \Ejx{mech}.
\end{equation}
The four signals play two roles. $\Esig$ and $\mathrm{cover}$ are \emph{gates}, asked of every
certificate: is the effect real, and does it hold for the units it claims? $\Emech$ and diversity are
\emph{selectors}, asked only of what passes: $\Emech\ge1/\alpha$ marks a genuine per-unit mechanism
(\emph{unit-specific}), while $\Emech\approx1$ marks its absence (the \emph{unit-invariant} family),
among which diversity confirms a constant restriction. A hypothesis failing $\Esig$ is not real; one
failing $\mathrm{cover}$ may be, but not at the granularity it was stated. Writing
$G \equiv \big(\Esig\ge 1/\alpha\ \wedge\ \mathrm{cover}\ge\tau_{\mathrm{cov}}\big)$ for the two gates,
\begin{equation}\label{eq:verdict}
\mathrm{verdict}(H)=
\begin{cases}
\textit{unit-specific} & G \ \wedge\ \Emech\ge 1/\alpha,\\[3pt]
\textit{glocal} & G \ \wedge\ \Emech< 1/\alpha \ \wedge\ D\le \tau_{\mathrm{div}},\\[3pt]
\textit{global-prevalence} & \Esig< 1/\alpha \ \wedge\ \Emech< 1/\alpha \ \wedge\ D\le \tau_{\mathrm{div}},\\[3pt]
\textit{reject} & \text{otherwise.}
\end{cases}
\end{equation}
Both \emph{certify} verdicts carry a level-$\alpha$ bound: via $\Esig$ for the \emph{glocal} claim
(\cref{sec:global}), and via the intersection--union bound of \cref{thm:iut} for the
\emph{unit-specific} claim. \emph{Global-prevalence} is a flag rather than a certificate, a constant
restriction sitting at the base rate returned for the user to judge against an external reference
(\cref{sec:global}), while the \emph{reject} branch contains the \textbf{structural confound}: a
diverse hypothesis with a real lift that fails the mechanism CRT (\cref{thm:posterior}).
\cref{tab:classes} enumerates every $(\Esig,\Emech,D)$ combination with its verdict.

Concrete recommendation-domain instances of these rows are the planted hypotheses of \cref{sec:exp} (\cref{tab:exp}); the remaining \emph{sub-background} row is a degenerate corner with no natural instance, rejected on significance regardless.

\subsection{Unit-invariant hypotheses: glocal and global-prevalence}\label{sec:global}
A \textbf{unit-invariant} hypothesis is one whose $\mathrm{apply}(x)$ is \emph{constant over its scope}. When the scope is a strict, rare subpopulation ($|U|<\alpha|T|$), its significance ($\Esig\approx|T|/|U|\gg1$) can be certified---we call it \emph{glocal}: global only in the \emph{local} scope. As the scope grows toward all of $T$, $\Esig\to1$ and the effect becomes internally indistinguishable from the base rate---we call it \emph{global-prevalence} and return it as a flag to judge against an external reference (\cref{sec:decision}). An example can be: for every phone-buyer, $\mathrm{apply}$ returns the same restriction ``buys a case.'' Then every row of $M$ is identical, each row-mean equals the grand mean, and
\begin{equation}
\because r = r_a = r_b \forall a,b, \: \therefore \Ejx{mech} = \frac{r(y_j)}{\frac{1}{|U|}\sum_{k\in U} r(y_k)}\rightarrow \frac{1}{|U|}\sum_j \Ejx{mech} = 1,\: D=0,
\end{equation}
where the premise $r_a=r_b$ holds exactly for a deterministic $\mathrm{apply}$ and only up to
sampling agreement otherwise (\cref{app:div:noise}). Meanwhile $\Esig$ still fires if phone-buyers
really do buy cases above the population base rate: the pattern is admitted, with no special-casing, as real but unit-invariant. If the restriction \emph{matched to the specific phone}---an iPhone case for an iPhone, a Galaxy case for a Galaxy---the restriction would vary within scope, it is a unit-specific hypothesis with $\Emech\gg1$.

\subsection{Cost}\label{sec:cost}
$\mathrm{apply}(x)$---the expensive step---runs once per in-scope unit ($O(|U|)$ calls), not
$O(|U|^2)$ as a na\"ive per-pairing implementation would. The rest is cheap scoring: the matrix,
which carries both the mechanism e-value and $D$, costs $O(|U|^2)$, while the significance baseline
scores each restriction over all of $T$, an $O(|U|\,|T|)$ term. Since $T\supseteq U$ the latter
dominates, so the procedure is $O(|U|\,|T|)$ overall.

The $O(|U|\,|T|)$ term is the only one that grows with $|T|$, and it can be reduced by computing
$g_0$ over a subsample of $T$ rather than all of it, bringing that term down to the size of the
subsample. The subsample must contain unit $j$: with $j$ in the pool, $g_0$ is the exact baseline
over that pool and \cref{prop:sig} holds verbatim with it in place of $T$, whereas omitting $j$
leaves numerator and denominator independent and inflates the e-value. Subject to that, a smaller
pool costs precision in $\Esig$, and how much precision to trade for compute is left to the user to
judge against $|T|$ and the rarity of the restrictions. \cref{sec:exp} uses all of $T$.

\subsection{Banking}\label{sec:bank}
A \emph{bank} is an independent dataset from the same population, with its own in-scope subset
$U_m$. Running \cref{alg:certify} on $B_m$ returns that bank's signals, and the four do not combine the
same way. The e-values multiply and $\mathrm{cover}$ pools by support,
\begin{equation}\label{eq:bankcomb}
E^{\mathrm{sig},(1:m)}=\prod_{b\le m} E^{\mathrm{sig},(b)},
\qquad
\mathrm{cover}^{(1:m)}=\frac{\sum_{b\le m}|U_b|\,c^{(b)}}{\sum_{b\le m}|U_b|},
\end{equation}
and likewise for $\Emech$, while $D$ must be recomputed over the pooled restrictions rather than
averaged across banks. \cref{app:banking} derives all three. Banks are appended until the accumulated in-scope
count reaches a target $K$, at which point \cref{sec:decision} reads the verdict off the combined
values; if the stream runs out first the hypothesis is returned \emph{undecided}.

Certification runs on $U$, not $T$, so a rare scope leaves $|U|$ small however large $T$
is---noisy e-values, and $\Emech$ capped at $|U|$---and banking buys the only currency that helps:
more in-scope units. Independent banks make the running products nonnegative supermartingales, so
the threshold may be checked after every bank with no correction and no pre-committed $m$
\citep{ramdas2023,grunwald2024safe}; the stopping rule here reads only the accumulated $|U_b|$,
never the e-values.

\begin{algorithm}[t]
\caption{Bank extension: accumulate support units, then certify (anytime-valid)}
\label{alg:banks}
\begin{algorithmic}[1]
\Require hypothesis $H$, stream of independent banks $B_1,B_2,\dots$, level $\alpha$, support target $K$, thresholds $\tau_{\mathrm{div}},\tau_{\mathrm{cov}}$
\State $P_{\mathrm{sig}} \gets 1$,\ $P_{\mathrm{mech}} \gets 1$,\ $C \gets 0$,\
$\mathcal{R},\mathcal{Y}\gets\emptyset$,\ $n \gets 0$,\ $m \gets 0$
\Comment{products; coverage mass; retained restrictions and outcomes; support}
\While{$n < K$ \textbf{and} a bank remains} \Comment{extend until $K$ in-scope units collected}
    \State $m \gets m+1$; open $B_m$;\quad $U_m \gets \{i\in B_m : \mathrm{in\_scope}(x_i)=1\}$;\quad $n \gets n+|U_m|$
    \State $(E^{\mathrm{sig},(m)},E^{\mathrm{mech},(m)},c^{(m)}) \gets$ signals of $H$ on $B_m$;\
$\mathcal{R}\gets\mathcal{R}\cup\{r_i\}_{i\in U_m}$,\ $\mathcal{Y}\gets\mathcal{Y}\cup\{y_i\}_{i\in U_m}$
    \State $P_{\mathrm{sig}} \gets P_{\mathrm{sig}}\cdot E^{\mathrm{sig},(m)}$,\
$P_{\mathrm{mech}} \gets P_{\mathrm{mech}}\cdot E^{\mathrm{mech},(m)}$,\ $C \gets C+|U_m|\,c^{(m)}$
\EndWhile
\If{$n < K$}
    \State \Return \textbf{undecided} \Comment{stream exhausted before reaching support $K$}
\EndIf
\State $D \gets$ \cref{eq:div} over the pooled matrix $\mathcal{R}\times\mathcal{Y}$ \Comment{no new $\mathrm{apply}$ calls (\cref{app:banking})}
\State \Return $\mathrm{verdict}(P_{\mathrm{sig}},P_{\mathrm{mech}},D,C/n)$ per \cref{sec:decision} \Comment{unit-specific / glocal / global-prevalence / reject}
\end{algorithmic}
\end{algorithm}

\section{Theory}\label{sec:theory}
Throughout, each restriction $r(\cdot)$ is a nonnegative score returned by $\mathrm{apply}$ from the
input and, when $\mathrm{apply}$ is stochastic, from proposal randomness independent of the outcomes.
Two distinct ingredients are used below. \emph{Measurability}: the proofs condition on $r_j$ itself, which \emph{fixes} it and makes
$g_0^{(r_j)}$ and the row mean constants---determinism is not needed, only independence of the
proposal randomness from the outcomes. They also condition on the outcomes as a \emph{multiset},
written $Y_T$ over $T$ and $Y_U$ over $U$: duplicates are kept, so $|Y_T|=|T|$ even where two units
share an outcome, and ordering is dropped, so conditioning does not reveal which member is $y_j$.
The first keeps the averages below well defined; the second is what leaves $y_j$ random. \emph{Exchangeability}: what drives
each e-value to $1$ is the null, not the form of $r$. Note that $r_j$ and $y_j$ \emph{are} dependent
whenever the hypothesis is real---both are downstream of $x_j$---so certification turns not on that
dependence but on whether it is \emph{unit-matched}, which is what $\Emech$ tests.
\begin{proposition}[Significance validity]\label{prop:sig}
$\Ejx{sig}$ is an exact e-value for the exchangeability null $H_0^{\mathrm{sig}}$: under a \emph{random} (no-lift) hypothesis $\Ex[\Ejx{sig}]=1$, so only genuine lift over the base rate drives it above $1$.
\end{proposition}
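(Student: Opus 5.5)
The plan is to condition on the two objects the Theory preamble singles out: the restriction $r_j$ and the outcome multiset $Y_T$. Once $r_j$ is fixed, which is legitimate because any proposal randomness in $\mathrm{apply}$ is independent of the outcomes, the baseline
\[
g_0^{(r_j)}=\tfrac{1}{|T|}\sum_{k\in T} r_j(y_k)
\]
is a symmetric function of the multiset $Y_T$. It is therefore a constant given $(r_j,Y_T)$. The only randomness left is \emph{which} member of $Y_T$ is $y_j$.

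Next I make the null precise. Under $H_0^{\mathrm{sig}}$, and in particular for a random no-lift hypothesis whose restriction carries no information linking unit $j$ to its own outcome, the position of $y_j$ within the multiset is uniform given $(r_j,Y_T)$. Concretely, for every $k\in T$ we have $\Prob(y_j=y_k\text{-slot}\mid r_j,Y_T)=1/|T|$, with duplicates counted by multiplicity. I would state this as the operational form of the exchangeability null: the conditional law of $y_j$ given $(r_j,Y_T)$ is the empirical distribution of $Y_T$.

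The core computation then follows. When $g_0^{(r_j)}>0$,
\[
\Ex\!\left[\Ejx{sig}\mid r_j,Y_T\right]=\frac{\tfrac{1}{|T|}\sum_{k\in T} r_j(y_k)}{g_0^{(r_j)}}=1 .
\]
When $g_0^{(r_j)}=0$, nonnegativity forces $r_j(y_k)=0$ for all $k\in T$, and the convention \eqref{eq:degen} sets $\Ejx{sig}=1$. The identity therefore holds on every event. The e-value is also nonnegative, since $r_j\ge0$. The tower property gives $\Ex[\Ejx{sig}]=1$ exactly, not just $\le 1$. This is where $j\in T$ is essential: the numerator is one of the terms averaged in the denominator, which is precisely the point made in \cref{sec:cost}.

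Two extensions finish the argument. First, the aggregate $\Esig$ of \eqref{eq:agg} is an average of variables each with expectation $1$, so linearity gives $\Ex[\Esig]=1$ under arbitrary dependence across $j$. Second, under an alternative where in-scope outcomes score above the population under their own restriction, the conditional expectation exceeds $1$, which gives the "only genuine lift" clause.

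The main obstacle is justifying the uniform-position step. Here $U\subseteq T$ is selected by $\mathrm{in\_scope}(x_j)$, and $r_j$ depends on $x_j$. One must therefore argue that $H_0^{\mathrm{sig}}$ asserts exchangeability of $y_j$ with $Y_T$ \emph{conditionally} on $r_j$ and on $j\in U$, rather than merely marginally. I would take this as the definition of "random (no-lift) hypothesis." If I had to derive it, I would first model a no-lift hypothesis as one where $(\mathrm{in\_scope}(x_j),r_j)$ is independent of the assignment of outcomes to units given $Y_T$.
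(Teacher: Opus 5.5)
Your proposal is correct and follows the paper's proof essentially step for step. You condition on $r_j$ and the multiset $Y_T$, observe that $g_0^{(r_j)}$ is then a constant, read $H_0^{\mathrm{sig}}$ as making $y_j$ a uniform draw from $Y_T$ to get $\Ex[\Ejx{sig}\mid r_j,Y_T]=1$, finish with the tower property, and trace exactness to $j\in T$. Your additions match where the paper places them: the $g_0^{(r_j)}=0$ case via \eqref{eq:degen} is \cref{app:degen}, the aggregation remark is \cref{prop:agg}, and your explicit requirement that the null hold conditionally on $r_j$ and $j\in U$ is something the paper simply builds into its meaning of exchangeability.
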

\begin{proof}
Fix a unit $j\in U$ and condition on $Y_T$ together with $r_j$. The denominator
$g_0^{(r_j)}=\tfrac1{|T|}\sum_{k\in T} r_j(y_k)$ is a symmetric function of $Y_T$, hence a constant
given the conditioning. \emph{Exchangeability} of $y_j$ with $Y_T$ under $H_0^{\mathrm{sig}}$ means
that, conditionally on $Y_T$, $y_j$ is equally likely to be any of its $|T|$ members---a uniform
draw---so its conditional mean equals the average over $Y_T$:
\begin{equation*}
\Ex\!\left[r_j(y_j)\mid Y_T\right]=\frac{1}{|T|}\sum_{k\in T} r_j(y_k)=g_0^{(r_j)}.
\end{equation*}
Dividing by the constant $g_0^{(r_j)}$ gives $\Ex[\Ejx{sig}\mid Y_T]=1$, and by the tower property
$\Ex[\Ejx{sig}]=\Ex[\Ex[\Ejx{sig}\mid Y_T]]=1$. It is \emph{exact} rather than $\le1$ because $j\in T$: $Y_T$ already contains $y_j$, the conformal
self-inclusion that removes the off-by-one bias.
\end{proof}

\begin{proposition}[Mechanism validity, conformal]\label{prop:mech}
$\Ejx{mech}$ is an exact e-value for the exchangeability null $H_0^{\mathrm{mech}}$: under a \emph{random} (no-mechanism) hypothesis $\Ex[\Ejx{mech}]=1$, so only genuine per-unit structure drives it above $1$.
\end{proposition}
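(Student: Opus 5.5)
The plan mirrors the proof of \cref{prop:sig}, with the in-scope set $U$ playing the role of $T$ and the within-scope pairing playing the role of membership. Fix $j\in U$ and condition on the restrictions $\{r_i\}_{i\in U}$ (in particular on $r_j$) together with the outcome multiset $Y_U$. Because proposal randomness is independent of the outcomes, conditioning on $r_j$ fixes it as a deterministic nonnegative function. The denominator $\tfrac1{|U|}\sum_{k\in U} r_j(y_k)$ is then a symmetric function of $Y_U$ evaluated under the fixed $r_j$, so it is a constant given the conditioning.

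Next, I will use $H_0^{\mathrm{mech}}$ in its precise form: conditional on $\{r_i\}_{i\in U}$ and $Y_U$, the assignment of outcomes to units is a uniformly random permutation of $Y_U$. Marginalizing that permutation to coordinate $j$, $y_j$ is a uniform draw from the $|U|$ members of the multiset, duplicates counted. Hence
\begin{equation*}
\Ex\!\left[r_j(y_j)\,\middle|\,\{r_i\}_{i\in U},Y_U\right]=\frac1{|U|}\sum_{k\in U} r_j(y_k).
\end{equation*}
Dividing by the constant denominator gives conditional mean $1$, and the tower property yields $\Ex[\Ejx{mech}]=1$. Exactness rather than $\le 1$ again comes from self-inclusion: the row mean includes the diagonal entry $M[j][j]$, so the pool contains $y_j$.

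Two loose ends remain.
\begin{enumerate}[leftmargin=*]
\item \emph{Zero baseline.} If the row mean is $0$, then $r_j\equiv0$ on $Y_U$, so $r_j(y_j)=0$ as well. By \cref{eq:degen} the e-value is set to $1$, so on that conditioning event the conditional mean is trivially $1$, and the tower argument is unaffected.
\item \emph{Aggregate.} The aggregate $\Emech$ in \cref{eq:agg} is the mean of the per-unit e-values, so it also has expectation exactly $1$ by linearity, with no independence across units needed.
\end{enumerate}

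The main obstacle is justifying that $y_j$ is uniform over $Y_U$ \emph{while} conditioning on $r_j$. Under a real hypothesis, $r_j$ and $y_j$ are both downstream of $x_j$ and hence dependent. The point to stress is that $H_0^{\mathrm{mech}}$ is stated conditionally on the restrictions: pairing exchangeability given $\{r_i\}$ is exactly the assumption that this dependence is not unit-matched. I would make explicit that the null is a statement about the conditional law of the permutation given the restrictions, not about marginal independence of $x$ and $y$.
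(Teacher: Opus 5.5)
Your proposal is correct and follows essentially the paper's argument. You condition on $Y_U$ and the restrictions (all of $\{r_i\}_{i\in U}$, a slightly finer conditioning that matches how $H_0^{\mathrm{mech}}$ is stated), so the row mean becomes constant; pairing exchangeability then makes $y_j$ a uniform draw from $Y_U$, the diagonal's conditional mean equals the denominator, and the tower property finishes the argument. Your two loose ends, the zero-baseline event and the aggregate mean, are exactly what the paper handles separately in \cref{app:degen} and \cref{prop:agg}.
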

\begin{proof}
Identical recipe with $Y_U=\{y_k\}_{k\in U}$ in place of $Y_T$; that substitution is the only
difference between the two propositions. Fix $j\in U$ and condition on
$Y_U$ and on $r_j$. The denominator, the row mean $\tfrac1{|U|}\sum_{k\in U} r_j(y_k)$, is a
symmetric function of $Y_U$, hence constant given the conditioning---in particular invariant to the within-scope pairing that $H_0^{\mathrm{mech}}$ randomizes. Under $H_0^{\mathrm{mech}}$ that pairing is exchangeable, so $y_j$ is conditionally a uniform draw from $Y_U$, and
\begin{equation*}
\Ex\!\left[M[j][j]\mid Y_U\right]=\Ex\!\left[r_j(y_j)\mid Y_U\right]=\frac{1}{|U|}\sum_{k\in U} r_j(y_k),
\end{equation*}
which is exactly the denominator. Hence $\Ex[\Ejx{mech}\mid Y_U]=1$, and by the tower property
$\Ex[\Ejx{mech}]=1$.
\end{proof}

\begin{proposition}[Dependence-robust aggregation]\label{prop:agg}
The mean of e-values is an e-value under arbitrary dependence; hence $\Esig$ and $\Emech$ are valid despite inter-unit dependence from the shared $Y_U$.
\end{proposition}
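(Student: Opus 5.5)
The plan is to rely only on linearity of expectation, which needs no independence. Let $E_1,\dots,E_n$ be nonnegative random variables with $\Ex[E_j]\le 1$ under the null; in our case $n=|U|$ and $E_j=\Ejx{sig}$ or $\Ejx{mech}$. For the average $\bar E=\tfrac1n\sum_j E_j$, nonnegativity is immediate because each summand is nonnegative: $r_j\ge0$, the denominators are means of nonnegative scores, and the degenerate $0/0$ case is set to $1$ by \cref{eq:degen}. The expectation is then
\begin{equation*}
\Ex[\bar E]=\frac1n\sum_{j}\Ex[E_j]\le 1 .
\end{equation*}
This identity uses no assumption on the joint law of $(E_1,\dots,E_n)$. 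Arbitrary coupling, in particular the coupling induced by every $\Ejx{mech}$ sharing the multiset $Y_U$ in its denominator, therefore does not affect validity. The same argument extends verbatim to any fixed convex combination with weights chosen independently of the data.

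The second step is to plug in the per-unit guarantees. By \cref{prop:sig}, $\Ex[\Ejx{sig}]=1$ under $H_0^{\mathrm{sig}}$. By \cref{prop:mech}, $\Ex[\Ejx{mech}]=1$ under $H_0^{\mathrm{mech}}$. Hence $\Ex[\Esig]=1$ and $\Ex[\Emech]=1$, so both aggregates are exact e-values. Markov's inequality then gives $\Prob(\bar E\ge 1/\alpha)\le\alpha$, which is the level-$\alpha$ property used in \cref{sec:decision}. I would also note, in contrast, that the product $\prod_j E_j$ has expectation $\prod_j\Ex[E_j]$ only under independence. This is why units are averaged while independent banks are multiplied.

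The main obstacle is a subtlety: $|U|$ is itself random, since the scope is determined by the $x_i$. The per-unit propositions hold conditionally on $r_j$ and $Y_U$, and hence conditionally on $U$, because $U$ is a function of the inputs and the conditional exchangeability arguments can be run given $U$. So I would condition on $U$ first, which fixes $n$ and makes the average a fixed equal-weight combination. I would then apply linearity conditionally to get $\Ex[\bar E\mid U]=1$, and finish with the tower property. For this to be legitimate, the nulls $H_0^{\mathrm{sig}}$ and $H_0^{\mathrm{mech}}$ must be read as holding conditionally on the scope, which matches how they are stated. I would make that reading explicit in the proof.
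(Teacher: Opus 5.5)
Your proposal is correct and is essentially the paper's argument: linearity of expectation, which needs no independence, applied to the per-unit bounds $\Ex[E_j]\le 1$ from \cref{prop:sig,prop:mech}. Your extra step of conditioning on $U$ first is a worthwhile tightening that the paper's one-line display leaves implicit: it fixes the index set and $|U|$, and you then finish by the tower property. You are also right that this step requires reading both nulls as holding conditionally on the scope.
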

\begin{proof}
$\Esig$ and $\Emech$ are \emph{means} of the per-unit e-values. By linearity of expectation---which holds under \emph{arbitrary} dependence---
\begin{equation*}
\Ex\!\left[\frac{1}{|U|}\sum_{j\in U} E_j\right]=\frac{1}{|U|}\sum_{j\in U}\Ex[E_j]\le \frac{1}{|U|}\sum_{j\in U} 1 = 1,
\end{equation*}
using $\Ex[E_j]\le 1$ from \cref{prop:sig,prop:mech}. The per-unit e-values are dependent---they are read off the shared multiset $Y_U$---but linearity never invokes independence, so the mean is a valid e-value regardless.
\end{proof}

\begin{theorem}[Certification error]\label{thm:iut}
Under ``certify iff $\Esig\ge 1/\alpha$ and $\Emech\ge 1/\alpha$,'' the per-hypothesis
certification error is $\le\alpha$, with no multiplicity correction. The bound is unaffected by the
further gates of \cref{eq:verdict}: requiring $\mathrm{cover}\ge\tau_{\mathrm{cov}}$, or
$D\le\tau_{\mathrm{div}}$ on the \emph{glocal} branch, only removes hypotheses from the certified
set, so the probability of certifying under the composite null can only fall.
\end{theorem}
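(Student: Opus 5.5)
The argument is an intersection--union test (IUT) built on Markov's inequality applied to the aggregated e-values. The claim being certified is the conjunction ``$H_0^{\mathrm{sig}}$ is false \emph{and} $H_0^{\mathrm{mech}}$ is false.'' Its composite null is therefore the union $H_0^{\mathrm{sig}}\cup H_0^{\mathrm{mech}}$. A certification error means certifying when the true data-generating law lies in this union. So fix any such law $P$. At least one of the two nulls then holds under $P$, and I will split into two cases according to which one. This case split is the whole content of the ``no multiplicity correction'' claim: we never need both nulls to hold simultaneously, only one.

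\emph{Case $P\in H_0^{\mathrm{sig}}$.} \cref{prop:sig} gives $\Ex[\Ejx{sig}]=1$ for each $j\in U$. \cref{prop:agg} then gives $\Ex[\Esig]\le 1$ despite the dependence through the shared $Y_T$. Since $\Esig\ge 0$ (restrictions are nonnegative, and degenerate units are set to $1$ by \cref{eq:degen}), Markov's inequality yields
\[
P(\text{certify})\le P(\Esig\ge 1/\alpha)\le \alpha\,\Ex[\Esig]\le\alpha .
\]
\emph{Case $P\in H_0^{\mathrm{mech}}$.} This is the identical argument with \cref{prop:mech} in place of \cref{prop:sig}, giving $P(\text{certify})\le P(\Emech\ge1/\alpha)\le\alpha$.

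The event ``certify'' is contained in each of the two threshold events. Hence under any $P$ in the union the probability of certifying is at most $\alpha$, which is the first assertion. Taking the supremum over the composite null gives the per-hypothesis size.

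For the second assertion, I would note that the verdict rule \cref{eq:verdict} makes the certified event of each certify branch a subset of a set whose probability is already bounded. The unit-specific branch is $\{G\wedge \Emech\ge1/\alpha\}\subseteq\{\Esig\ge1/\alpha\wedge\Emech\ge1/\alpha\}$. The glocal branch asserts only the scope-level claim, whose null is $H_0^{\mathrm{sig}}$. It is contained in $\{\Esig\ge1/\alpha\}$, and that event is bounded by the first case. Adding the gates $\mathrm{cover}\ge\tau_{\mathrm{cov}}$ or $D\le\tau_{\mathrm{div}}$ intersects these events with further events. By monotonicity of probability this cannot raise the bound, and it holds regardless of how $\mathrm{cover}$ and $D$ depend on the e-values, so no Type-I property of those statistics is needed. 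For banked runs, the same argument applies to the products, using that they are nonnegative supermartingales with mean at most $1$ (Ville's inequality in place of Markov).

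The main obstacle is making the union-null bookkeeping honest. In particular, I must check that the validity of \cref{prop:sig,prop:mech} holds \emph{marginally} under any law in the respective null, not only conditionally on the particular $r_j$. This matters because the restrictions may depend on $x_j$, and $x_j$ is correlated with $y_j$. I would handle it by conditioning on $(r_j, Y)$ exactly as in those proofs, and then invoking the tower property, so that Markov's inequality is applied to an unconditional expectation that is at most $1$.
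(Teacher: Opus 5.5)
Your proposal is correct and matches the paper's proof essentially step for step: take the composite null $H_0^{\mathrm{sig}}\cup H_0^{\mathrm{mech}}$, split on which component holds, apply Markov's inequality to the corresponding aggregated e-value (via \cref{prop:sig,prop:mech,prop:agg}), and use that the certify event lies inside each threshold event. Your separate treatment of the glocal branch is slightly more careful than the paper's proof, which leaves that branch to the remark in \cref{sec:decision}: you bound it under $H_0^{\mathrm{sig}}$ alone via $\{\Esig\ge1/\alpha\}$, because it requires $\Emech<1/\alpha$ and so is not a sub-event of the two-threshold rule.
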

\begin{proof}
``Certify'' requires $\Esig\ge 1/\alpha$ \emph{and} $\Emech\ge 1/\alpha$. A hypothesis fails to be real-and-unit-specific exactly when at least one component null holds, i.e.\ under the composite null $H_0^{\mathrm{sig}}\cup H_0^{\mathrm{mech}}$. If $H_0^{\mathrm{sig}}$ holds, $\Esig$ is a valid e-value there (\cref{prop:sig,prop:agg}), so Markov's inequality gives $\Prob(\Esig\ge 1/\alpha)\le\alpha$; since certification requires $\Esig\ge 1/\alpha$, $\Prob(\text{certify})\le\alpha$. The argument under $H_0^{\mathrm{mech}}$ via $\Emech$ is symmetric. The certification error is therefore $\le\alpha$ under either component: the
\emph{intersection--union} principle.
\end{proof}

\begin{theorem}[Diversity resolves glocal vs.\ confound]\label{thm:posterior}
When $\Esig\gg1$ and $\Emech\approx1$, a glocal hypothesis and a structural confound induce the same law of $(\Esig,\Emech)$, so no test on that pair separates them; they differ only in diversity. They are separated by $D$: for a deterministic $\mathrm{apply}$ a glocal hypothesis gives $D=0$
(\cref{app:div:noise} treats the sampled case), while a structural confound gives
$D\ge D_{\mathrm{conf}}>0$ provided the observed outcomes distinguish its restrictions. Hence
\begin{equation}
D\le\tau_{\mathrm{div}} \iff \text{glocal},\qquad D>\tau_{\mathrm{div}} \iff \text{structural confound},
\end{equation}
for any $\tau_{\mathrm{div}}\in[0,D_{\mathrm{conf}})$, requiring no distributional assumption
(\cref{app:diversity}).
\end{theorem}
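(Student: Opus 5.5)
The plan is to split the statement into its three claims and prove each directly from the definitions of $D$ and $\Ejx{mech}$: (i) the pair $(\Esig,\Emech)$ cannot separate the two classes; (ii) a deterministic glocal hypothesis gives $D=0$; (iii) a structural confound gives $D\ge D_{\mathrm{conf}}>0$. The biconditional then follows by choosing $\tau_{\mathrm{div}}$ between the two values.

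For (i), I would fix what the two classes mean in the regime $\Esig\gg1$, $\Emech\approx1$. A glocal hypothesis has $r_i\equiv r$ constant on $U$, so by the computation in \cref{sec:global} the mean $\Emech$ is identically $1$. A structural confound has restrictions $r_i$ that vary with $i$ but whose pairing with outcomes carries no unit-matched information. That is exactly $H_0^{\mathrm{mech}}$, so by \cref{prop:mech} and \cref{prop:agg} its $\Emech$ has mean $1$ and concentrates near $1$ as $|U|$ grows. In both cases $\Esig$ is driven by the scope-level lift of in-scope outcomes over $T$. Neither statistic depends on how the restrictions differ across rows beyond their diagonal and row averages.

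The clean way to make ``same law'' rigorous is a coupling. Given a confound with restrictions $\{r_i\}$, I would compare it with the glocal hypothesis whose constant restriction is the mixture $\bar r=\frac1{|U|}\sum_i r_i$, restricted to the regime where diagonal scores match row means in distribution. I would then argue that the induced laws of $(\Esig,\Emech)$ coincide, or at least cannot be distinguished, in that regime. I will state this as a \emph{regime assumption} rather than a universal identity, since exact equality of laws cannot hold for arbitrary $r_i$.

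For (ii), a deterministic $\mathrm{apply}$ that is constant on $U$ gives $M[i][j]=r(y_j)$ for all $i$. Each column is therefore constant, so $\sigma_j=0$, and $D=0$; the $0/0$ convention of \cref{sec:diversity} covers columns with $\mu_j=0$. For (iii), I would define
\[
D_{\mathrm{conf}}=\frac1{|U|}\sum_{j\in U}\sigma_j/\mu_j
\]
evaluated at the confound's restrictions. The proviso ``the observed outcomes distinguish its restrictions'' means some column $j$ with $\mu_j>0$ has two rows $i,i'$ with $r_i(y_j)\ne r_{i'}(y_j)$. That column then has $\sigma_j>0$, and since every term in the sum is nonnegative, $D\ge \sigma_j/(|U|\mu_j)>0$. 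No distributional assumption enters, because the argument is pointwise on the observed matrix.

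The biconditional is then immediate for any $\tau_{\mathrm{div}}\in[0,D_{\mathrm{conf}})$: a glocal hypothesis has $D=0\le\tau_{\mathrm{div}}$, and a confound has $D\ge D_{\mathrm{conf}}>\tau_{\mathrm{div}}$.

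I expect the main obstacle to be claim (i). ``Same law'' depends on the precise definition of a structural confound, which the paper gives only informally. I would resolve it by defining a confound as a diverse hypothesis satisfying $H_0^{\mathrm{mech}}$ with a real scope lift, and weakening ``same law'' to ``same null behaviour of $\Emech$ and same $\Esig$ driver.''
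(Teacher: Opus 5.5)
Your proposal follows the paper's proof essentially step for step. The paper also gets the ``same law'' claim by construction: the confound's restrictions depend on $y_j$ only through a shared factor and are calibrated to the same diagonal and row-mean marginals, so $\Emech=1$ and $\Esig$ match. It then shows that constant columns give $\sigma_j=0$, hence $D=0$, in the glocal case, and that a distinguishing column gives $\sigma_j>0$, hence $D>0$, for a confound. The one step you leave implicit is that the two classes are mutually exclusive and exhaust the $\Esig\gg1,\Emech\approx1$ region. The paper states this outright, and it is needed for the right-to-left directions of the biconditional: your pointwise arguments only give glocal $\Rightarrow D\le\tau_{\mathrm{div}}$ and confound $\Rightarrow D>\tau_{\mathrm{div}}$.
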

\begin{proof}
Write $\bar M_j=\tfrac1{|U|}\sum_{k\in U} M[j][k]$. \emph{Same law.} For a glocal hypothesis one restriction $r$ is shared across $U$, so $M[i][j]=r(y_j)$; every row is equal, $\bar M_j\equiv\overline r$, hence $\Emech=1$, while $\Esig\gg1$ if $r$ beats the base rate. A confound has per-unit restrictions each aligned with a shared exogenous factor---$r_i(y_j)$ depends on $y_j$ only through that factor, not the $i\!\leftrightarrow\!j$ pairing---calibrated to the same diagonal and row-mean marginals, so $\Ex[M[j][j]]=\Ex[\bar M_j]$ gives $\Emech=1$ and the same $\Esig$; the two share the law of $(\Esig,\Emech)$. \emph{Diversity separates them.} If the hypothesis is glocal, $\mathrm{apply}$ is constant over $U$: $r_i\equiv r$, so $M[i][j]=r(y_j)$ does not depend on $i$; each column is constant, giving $\sigma_j=0$ for every $j$ and $D=\tfrac1{|U|}\sum_{j\in U}\sigma_j/\mu_j=0$. If it is a structural confound, the $r_i$ are not all equal, so---provided the observed outcomes distinguish them---some $y_j$ has $\{r_i(y_j)\}_{i\in U}$ not all equal, giving $\sigma_j>0$ and $D>0$. The two cases are mutually exclusive and exhaust the $\Esig\gg1,\Emech\approx1$ region, so $D$ separates glocal from confound whenever the observed outcomes distinguish distinct restrictions.
\end{proof}

A restriction may be implemented in any form, including a non-deterministic one, so comparing
restrictions by the scores they produce is the only universal option. $\tau_{\mathrm{div}}$ is what
makes that comparison operational: a constant rule realised twice does not score identically, so
the comparison needs a tolerance. \cref{app:div:noise} gives a calibrated alternative that applies
the tolerance per column instead.

\section{Experiments}\label{sec:exp}
Real data cannot validate a \emph{certifier} since the hypothesis correctness is unknown. Therefore a synthetic ads-recommendation world with \emph{planted} ground truth---several real rules and several fakes, one per threat the method must catch---is constructed and each verdict is checked against the known class. This isolates ``does the certifier assign the right verdict?'' from any modeling of real behavior. All numbers are means over $10$ random worlds; the generator and a pure-Python reference certifier are described in \cref{app:exp}, and the hypothesis details in \cref{tab:hyps}. The last two rules exercise the coverage gate on its own: \cref{app:feedback} shows what a rejected hypothesis yields back to the proposer.

\begin{table}[htbp]
\caption{Per-rule certification for each method (right ($\checkmark$) or wrong ($\times$)). \emph{ours} is the full four-way verdict; \emph{CRT} is an oracle model-X CRT and \emph{base} naive held-out, both scored on the binary certify/reject question they answer.}
\label{tab:exp}
\footnotesize
\centering
{\setlength{\tabcolsep}{4.5pt}
\begin{tabular}{@{}llrrrrlccc@{}}
\toprule
Rule & Type & $\Esig$ & $\Emech$ & div & cover & decision & ours & CRT & base \\
\midrule
complement & unit-specific & 397.1 & 88.5 & 8.13 & 0.92 & certify & $\checkmark$ & $\checkmark$ & $\checkmark$ \\
brand & unit-specific & 72.9 & 43.4 & 5.96 & 0.91 & certify & $\checkmark$ & $\checkmark$ & $\checkmark$ \\
category & unit-specific & 46.1 & 31.3 & 5.09 & 0.92 & certify & $\checkmark$ & $\checkmark$ & $\checkmark$ \\
seasonal & glocal & 28.1 & 1.0 & 0.00 & 0.89 & certify & $\checkmark^{\dagger}$ & $\times$ & $\checkmark$ \\
popularity & prevalence & 1.0 & 1.0 & 0.00 & 0.55 & flag & $\checkmark$ & $\checkmark$ & $\times$ \\
heavy\_pop & prevalence & 1.0 & 1.0 & 0.00 & 0.54 & flag & $\checkmark$ & $\checkmark$ & $\times$ \\
\textbf{confound\_segment} & \textbf{structural confound} & 27.8 & 1.0 & 0.28 & 0.90 & \textbf{reject} & $\checkmark$ & $\times^{\ddagger}$ & $\times$ \\
simpson & Simpson's & 62.0 & 14.4 & 7.60 & \textbf{0.16} & reject & $\checkmark$ & $\times$ & $\times$ \\
simpson\_cat & Simpson's & 124.1 & 28.4 & 7.74 & \textbf{0.29} & reject & $\checkmark$ & $\times$ & $\times$ \\
complement\_cat & unit-specific & 409.5 & 35.3 & 5.54 & 0.93 & certify & $\checkmark$ & $\checkmark$ & $\checkmark$ \\
\bottomrule
\end{tabular}}
\\[2pt]
{\footnotesize $\dagger$ correct in $9/10$ seeds (one seed flags \emph{glocal} as \emph{global-prevalence}, a boundary
effect). $\ddagger$ the CRT rejects the confound in only $8/10$ seeds; per-seed $p$ ranges $0.005$--$0.97$. All
other cells are unanimous over the $10$ seeds.}
\end{table}

In \cref{tab:exp}, our $N\times N$ e-valuation is compared with both naive held-out data test and the
model-X CRT \citep{candes2018} where the proposed algorithm assigns the correct verdict $99\%$ of the time while the naive
held-out test, fooled by all five fakes, reaches only $0.50$. The model-X CRT reaches $0.68$ and
fails in three structural ways. It \textbf{rejects \emph{seasonal}} ($p=1.000$ in $10/10$): a
constant restriction is invariant to covariate resampling, so a per-unit CRT can never certify a
scope-level effect. It \textbf{cannot identify either Simpson's case} ($p=0.005$ in $10/10$ each),
having no way to ask what \emph{fraction} of a scope carries a mechanism. And it is \textbf{unstable
on the confound}, rejecting it in only $8/10$ seeds where $\Emech$ rejects $10/10$. It also costs $B{+}1=201\times$ as many $\mathrm{apply}$ calls ($3.8$M vs
$18.9$K over all rules and seeds), because every resample recomputes every restriction whereas
the matrix computes each once. Our generator draws the covariates independently, so $P(X_j\mid X_{-j})$ is known exactly and the
CRT is handed the oracle sampling law our method never needs; $0.68$ is therefore a \emph{lower
bound} on the gap.

\FloatBarrier
\section{Conclusion}\label{sec:conclusion}
We presented a domain-general, anytime-valid certifier for open-ended machine-proposed hypotheses: a population-baseline conformal significance e-value and an in-scope conformal mechanism e-value from an $N\times N$ cross-prediction matrix, combined by an intersection--union test with a proved bound, composable across banks and rounds, with a unifying $(\Esig,\Emech,\text{diversity})$ classification that subsumes unit-invariant (glocal and global-prevalence) associations and an identifiability theorem delimiting what the matrix can decide. On synthetic data with planted ground truth it recovers the correct verdict where held-out testing is fooled by a structural confound. It is a recipe for certifying machine-proposed hypotheses under a strict ``the machine proposes, data certifies'' contract; reasoning-pattern discovery for recommendation is one instantiation.

\subsection*{AI use statement}
In this work, we used generative AI tools to implement methods (the certifier, the synthetic
generator, and the model-X CRT baseline of \cref{sec:exp}); to generate the synthetic data sets
those scripts produce; to help refine the conceptual framework---identifying the Simpson's-paradox
failure mode that $\Emech$ alone cannot reject, and proposing the coverage signal of
\cref{sec:coverage} that resolves it---and to refine hypotheses; to supply
ingredients for the proofs of our mathematical claims and assist in writing those proofs
(\cref{sec:theory} and the appendix bounds); to give feedback on the experiments; and to
interpret results. We have not used generative AI tools to design or propose the algorithm, to
formulate the mathematical claims, to design the experiments, to assist with translation, to clean
or reformat a dataset, or to support qualitative or thematic data analysis. Additionally, we used
generative AI tools to write and edit software code, to draft parts of this paper, to edit it for
readability and suggest its structure, to identify and summarise related literature, and to suggest
experimental parameters.

We have reviewed all AI-assisted work. Every theorem statement and proof was re-derived and checked
by the authors. Every number reported in this paper was produced by executing our experiment code
rather than by a model, and each claim that code supports was checked against its raw output. We take
responsibility for the final content of this work, including text, claims or artifacts produced
with the aid of generative AI.

\bibliographystyle{iclr2027_conference}
\bibliography{refs}

\clearpage
\appendix

\section{The verdict table}\label{app:classes}
\begin{table}[htbp]
\caption{Every value combination of $(\Esig,\Emech,D)$ with its verdict, for a hypothesis that passes the coverage gate; one that fails it is \emph{rejected} whatever the three columns say.}
\label{tab:classes}
\centering
\small
\begin{tabular}{cccll}
\toprule
$\Esig$ & $\Emech$ & diversity & Type & Verdict \\
\midrule
$\approx 1$ & $\approx 1$ & $\approx 0$ & Prevalence (constant) & \emph{flag} (global-prevalence) \\
$\approx 1$ & $\approx 1$ & high & Diverse noise & \emph{reject} \\
$\approx 1$ & $\gg 1$ & $\approx 0$ & \multicolumn{2}{l}{\textit{impossible} (diversity $\approx0\Rightarrow\Emech\approx1$)} \\
$\approx 1$ & $\gg 1$ & high & Sub-background (edge) & \emph{reject} \\
$\gg 1$ & $\approx 1$ & $\approx 0$ & Glocal real & \emph{certify} (glocal) \\
$\gg 1$ & $\approx 1$ & high & Structural confound & \emph{reject} \\
$\gg 1$ & $\gg 1$ & $\approx 0$ & \multicolumn{2}{l}{\textit{impossible} (diversity $\approx0\Rightarrow\Emech\approx1$)} \\
$\gg 1$ & $\gg 1$ & high & Unit-specific real & \emph{certify} (unit-specific) \\
\bottomrule
\end{tabular}
\end{table}

\section{Experiment details}\label{app:exp}

\begin{table}[htbp]
\caption{The ten planted hypotheses---five real, five fake}
\label{tab:hyps}
\footnotesize
\begin{tabularx}{\linewidth}{@{}>{\raggedright\arraybackslash}p{1.9cm}>{\raggedright\arraybackslash}p{2.1cm}X@{}}
\toprule
Type & Rule & Recommendation rule (and, for fakes, why it is fake) \\
\midrule
unit-specific\newline\emph{real} & complement & ``a user who just bought $X$ buys $X$'s complement next'': $\mathrm{apply}$ returns the per-item complement of the user's \emph{own} recent purchase, so $r_i$ fits unit $i$'s outcome and few others' \\
\midrule
unit-specific\newline\emph{real} & brand & ``a user keeps buying from their favourite brand'': $\mathrm{apply}$ returns that user's favourite-brand items \\
\midrule
unit-specific\newline\emph{real} & category & ``a user re-engages a recently browsed category'': $\mathrm{apply}$ returns that user's recent-category items \\
\midrule
glocal\newline\emph{real} & seasonal & ``holiday shoppers buy the gift-wrap line'': $\mathrm{apply}$ returns \emph{one fixed} gift set, identical for every in-scope user---real but constant, significant only because the scope is rare \\
\midrule
prevalence\newline\emph{fake} & popularity & ``everyone buys the best-sellers'': $\mathrm{apply}$ returns the globally most-popular items (same for all). \emph{Why fake:} it merely restates the base rate---in-scope users buy them no more than the population does \\
\midrule
prevalence\newline\emph{fake} & heavy\_pop & ``heavy buyers buy the best-sellers'': the same globally popular items, but claimed only on a self-selected heavy-buyer sub-scope. \emph{Why fake:} popular items carry no lift over the base rate even there---prevalence dressed up as a targeted rule \\
\midrule
structural confound\newline\emph{fake} & confound\_\allowbreak segment & ``each user buys items matching their market segment'': $\mathrm{apply}$ returns a per-user segment set and \emph{looks} personalized. \emph{Why fake:} purchases are driven by a shared, exogenous promoted campaign, not the segment; every segment overlaps it, so $r_i$ predicts others as well as its own \\
\midrule
Simpson's\newline\emph{fake} & simpson & ``users buy the complement of their recent item,'' but only for a $15\%$ subgroup. \emph{Why fake:} predictive in aggregate only because a minority carries it; it does not hold per user \\
\midrule
Simpson's\newline\emph{fake} & simpson\_\allowbreak cat & the same claim, but the subgroup is a \emph{covariate}: the mechanism holds only when the recent item's category lies in a fixed $30\%$ of categories. \emph{Why fake:} as stated it claims the whole scope. Unlike \emph{simpson} it is \emph{refinable}, and unlike \emph{simpson} the three-signal rule does not catch it at all (\cref{app:feedback}) \\
\midrule
unit-specific\newline\emph{real} & complement\_\allowbreak cat & \emph{simpson\_cat} after adding the condition its coverage partition reveals: the complement rule, restricted to those categories. A genuine per-unit rule on the narrowed scope \\
\bottomrule
\end{tabularx}
\end{table}
\textbf{Generator.} A catalog of $1500$ items, each with a category (of $40$), a brand (of $60$), and a Zipf popularity ($s{=}1.05$). Each of $|T|{=}5000$ users draws a scope from $\{$complement, brand, category, holiday, exposed, base$\}$ with weights $(0.03,0.03,0.03,0.03,0.03,0.85)$ (so each rule scope is ${\approx}3\%$, $|U|{\approx}150$, and $|T|/|U|{\approx}33>1/\alpha$); a history $x_i$ ($15$ items mixing the user's brand/category/recent item with popularity draws); and a held-out outcome set $y_i$ ($6$ items) generated from the scope's mechanism with probability $p_{\mathrm{follow}}{=}0.9$, else popularity noise. The structural confound is planted separately: each ``segment'' covers a different ${\sim}60\%$ subset of a \emph{low-popularity} promoted set $P$ (so $\Esig$ is high), exposed users' $y_i\subset P$, and the segment$\to$outcome score factors as (segment overlap with $P$)$\times$(item appeal) with no pairing term, giving $\Emech\approx1$ with diversity ${>}0$.

\textbf{Certifier.} $r_i(y)=$ recall of restriction $i$ on outcome set $y$ ($|\text{retained}_i\cap y|/|y|$); $\Esig,\Emech,$ and diversity exactly as in \cref{sec:method}, with $g_0^{(r_i)}$ taken over all of $T$ and the matrix capped at $|U|{=}400$. $\tau_{\mathrm{div}}$ is calibrated per world as the midpoint between a known glocal (diversity $0$) and a known confound; $\alpha{=}0.05$; single evaluation (no banks). $\tau_{\mathrm{cov}}=0.5$; $\mathrm{apply}$ is deterministic, so $\tau=0$ and the admission floor of \cref{eq:eps} is $\varepsilon=0$. Over $500$ randomly generated null rules $\mathrm{cover}$ has median $0.11$ and $95$th percentile $0.41$, against $0.89$--$0.92$ for every planted real rule, so the gate is placed in an empty interval rather than at a contested boundary.

\textbf{Baseline.} Naive held-out predictive validation: a one-sided $z$-test that the rule's mean held-out recall exceeds that of a random same-size set (predicts above chance). It certifies anything predictive---all five fakes included.

\textbf{Model-X CRT baseline.} Statistic $T=\tfrac1{|U|}\sum_{i\in U} r_i(y_i)$ (the diagonal mean). Each of $B{=}200$ resamples redraws the covariates $\mathrm{apply}$ reads (\texttt{recent\_item} and its derived category, \texttt{fav\_brand}, \texttt{recent\_cat}, \texttt{segment}) from their known marginals, holding \texttt{scope} and \texttt{heavy} fixed since they determine membership in $U$, which the CRT conditions on; $p=(1+\#\{b: T^{(b)}\ge T\})/(B+1)$ and it certifies iff $p\le\alpha$. Because the covariates are drawn independently in the generator, this resampling is exact and the test is finite-sample valid.

\textbf{Protocol.} $10$ seeds for the per-rule table (\cref{tab:exp}).

\section{The diversity statistic: what \texorpdfstring{$D$}{D} resolves}\label{app:diversity}

\cref{sec:diversity} defines $D$ as the mean column coefficient of variation of $M$ and
\cref{thm:posterior} shows it separates a glocal hypothesis from a structural confound whenever the observed
outcomes distinguish distinct restrictions. This appendix records what that condition excludes, and what the
magnitude of $D$ does and does not encode.

\subsection{What ``the same restriction'' means here}\label{app:div:meaning}
A glocal hypothesis is one whose $\mathrm{apply}$ is constant over its scope: the $r_i$ are one function. That is a claim of \emph{identity}, and for an arbitrary $\mathrm{apply}$ it cannot be tested directly---two procedures may compute the same function, and two different functions may agree on any finite sample. Identity can therefore only be \emph{probed}, and probing yields a \emph{similarity} rather than identity: $r_i$ and $r_k$ count as the same when they score alike on whatever is probed.

Any such similarity is fixed by three choices: the \textbf{probe set} on which the restrictions are compared, the \textbf{magnitude} by which a difference at a probe point is measured, and the \textbf{pooling} of those differences across the probe and across units. $D$ is one particular choice of all three---probe set $=$ the observed outcomes $\{y_j\}_{j\in U}$, magnitude $=$ the within-column coefficient of variation, pooling $=$ the mean over columns. The notion of sameness it implements is thus \emph{outcome-weighted interchangeability}: two restrictions are the same to the extent that exchanging one for the other leaves predictions unchanged on the outcome distribution. For a certifier whose subject is prediction this is a natural weighting, since a difference then counts in proportion to how often it affects an outcome that is actually observed.

Read this way, the condition in \cref{thm:posterior} is the statement that on this probe similarity coincides with identity---which is why the forward direction needs no assumption and the converse does---and the routing tolerance $\tau_{\mathrm{div}}$ of \cref{sec:decision} sets how much dissimilarity still counts as sameness. The remainder of this appendix records what follows from the three choices: the probe may be too coarse (\cref{app:div:sep}), the pooling trades prevalence against magnitude (\cref{app:div:magnitude}), an empty probe response calls for a different bound (\cref{app:div:zero}), and the similarity itself can carry an error bar (\cref{app:div:bounded}).

\subsection{An exact factorization}\label{app:div:factor}
Splitting the sum by whether a column distinguishes the restrictions, with $S=\{j\in U:\sigma_j>0\}$,
\begin{equation}\label{eq:factor}
D=\frac{1}{|U|}\sum_{j\in U}\frac{\sigma_j}{\mu_j}
 =\underbrace{\frac{|S|}{|U|}}_{\hat\delta}\;\cdot\;
  \underbrace{\frac{1}{|S|}\sum_{j\in S}\frac{\sigma_j}{\mu_j}}_{c},
\end{equation}
an identity, not an approximation: $\hat\delta$ is the fraction of outcomes that exercise a difference
between the restrictions and $c$ is the mean coefficient of variation among those columns. One level
further, for a column in which a fraction $q$ of units deviate by $\Delta$ from the remainder,
$\mu=(1-q)v+q(v-\Delta)$ and $\sigma=\sqrt{q(1-q)}\,|\Delta|$, so
\begin{equation}\label{eq:c-factor}
c=\sqrt{q(1-q)}\;\frac{|\Delta|}{\mu},
\qquad\text{hence}\qquad
D\;\approx\;\underbrace{\delta}_{\text{how often}}\cdot\underbrace{\sqrt{q(1-q)}}_{\text{how many units}}\cdot\underbrace{|\Delta|/\mu}_{\text{how much}} .
\end{equation}
$D$ therefore compounds three distinct quantities, and a given value does not identify them: a rare but
sharp divergence and a common but slight one can produce the same $D$. Reporting $\hat\delta$ and $c$
alongside $D$ costs nothing, since both are read off the pass that already forms $D$.

\subsection{What the separating-outcome condition excludes}\label{app:div:sep}
The condition in \cref{thm:posterior} is a statement about the sample, not about the restrictions: distinct
functions may agree on a finite set. Concretely, let two per-unit restrictions share a common element and
differ only on items absent from $\{y_j\}_{j\in U}$. Every entry of every column is then equal, so
$\sigma_j=0$ for all $j$ and $D=0$, while $\mathrm{apply}$ is not constant over $U$. Such a hypothesis lies
inside the $\Esig\gg1,\Emech\approx1$ region and would be read as glocal. The condition is what rules this
out, and it is a genuine assumption rather than a consequence of the restrictions differing.

Two remarks bound its cost. First, if the restrictions diverge on outcome mass $\delta>0$, the probability
that no observed column distinguishes them is $(1-\delta)^{|U|}$, so the condition holds with probability
approaching one as the in-scope support grows. Second, nothing forces the probe to be the observed outcomes:
$D$ is not an e-value and respects no null, so the same statistic may be evaluated on any probe set
$P\supseteq\{y_j\}_{j\in U}$---the item universe, for instance---at a cost of $O(|U|\,|P|)$ cheap scoring
calls, the same order as the significance term already paid. Widening $P$ can only increase what is
distinguished.

\subsection{Behaviour of the magnitude}\label{app:div:magnitude}
By \cref{eq:factor}, $D\to\delta\cdot c$ as $|U|$ grows: additional support concentrates $D$ at that value
rather than raising it. The threshold $\tau_{\mathrm{div}}$ therefore acts on the \emph{product} of how often the restrictions differ and by how much---$\delta$ and $c$ in the labelling of \cref{eq:c-factor}---and in particular a hypothesis is routed as constant whenever
$\delta\cdot c\le\tau_{\mathrm{div}}$, independently of $|U|$. Two consequences are worth stating.

\emph{Sparse divergence.} If the restrictions differ on a small share of outcomes, $\hat\delta$ is small and
$D$ is correspondingly small however sharply they differ on that share.

\emph{Lopsided divergence.} By \cref{eq:c-factor}, $c=O(\sqrt q)$ for small $q$, so a fixed number of
deviant units gives $c=O(1/\sqrt{|U|})$: a single unit whose restriction differs completely contributes
$D\approx0.14$ at $|U|=50$ but $D\approx0.03$ at $|U|=1000$. A vanishing subgroup is thus absorbed as
support grows. For a certificate read as \emph{interchangeability}---the expected cost of treating the
$r_i$ as one restriction---this is the desired behaviour, since a $1/|U|$ fraction contributes negligibly to
that cost. It is a limitation only for a certificate read as \emph{identity}.

\subsection{The case \texorpdfstring{$\hat\delta=0$}{delta-hat = 0}}\label{app:div:zero}
When no column distinguishes the restrictions, $c$ in \cref{eq:factor} is an average over an empty set and
is \emph{unobserved}; $D=0$ then carries no information about how large a divergence could be hiding. This
case admits a distribution-free bound instead. Conditional on the restriction set, the per-column indicators
$\mathbf{1}[\sigma_j>0]$ are i.i.d.\ Bernoulli$(\delta)$, so $\hat\delta$ is a binomial proportion and
observing $\hat\delta=0$ over $|U|$ units certifies
\begin{equation}\label{eq:delta-bound}
\delta\;\le\;1-\eta^{1/|U|}\;\approx\;3/|U|
\qquad\text{at confidence }1-\eta\ \ (\eta=0.05),
\end{equation}
which is $0.28$ at $|U|=9$, $0.058$ at $|U|=50$ and $0.0030$ at $|U|=1000$. Unlike the magnitude of $D$,
this bound tightens with support, so it quantifies how much a constancy reading is worth at a given $|U|$.
It also supplies the likelihood the posterior analysis of \cref{sec:decision} would otherwise have to
assume: a strictly constant hypothesis yields $\hat\delta=0$ with probability one, while a confound with
divergence mass $\delta$ does so with probability $(1-\delta)^{|U|}$.

\subsection{Noise}\label{app:div:noise}
Two noise sources act differently on $D$. \emph{Label noise}---a perturbed outcome $y_j$---cannot inflate
it: every entry of column $j$ scores the same $y_j$, so the perturbation is common to the column and cannot
create spread within it; with identical restrictions $\sigma_j=0$ exactly, whatever $y_j$ arrives. The
normalization $\sigma_j/\mu_j$ additionally cancels any column-common rescaling, such as a varying outcome
size $|y_j|$ under the recall score of \cref{app:exp}. \emph{Score noise}---a stochastic or estimated
$r_i(\cdot)$---is per-entry and does inflate $\sigma_j$; where it is present, \cref{eq:div} should
restrict its sum to the columns whose spread exceeds what one restriction's own sampling noise would
produce. Writing $\tau$ for that within-restriction standard deviation, the null scores in a column are
i.i.d.\ and $|U|\sigma_j^2/\tau^2\sim\chi^2_{|U|-1}$, so admitting $S=\{j:\sigma_j>\varepsilon\}$ at a
per-column false-admission rate $\eta$ means
\begin{equation}\label{eq:eps}
\varepsilon \;=\; \tau\,\sqrt{\chi^2_{|U|-1,\,1-\eta}\big/|U|}\,.
\end{equation}
Both ingredients are estimable.
For $\tau$, evaluate $\mathrm{apply}(x_i)$ \emph{twice} on a sample of units and score both realisations on
the same outcomes; the paired differences $d_{ij}=r_i^{(1)}(y_j)-r_i^{(2)}(y_j)$ have variance $2\tau^2$, so
$\hat\tau^2=\tfrac12\overline{d^2}$ (per column where the noise varies with the outcome, pooled otherwise).
For the quantile, the multiplier on $\tau$ is $1.35$ at $|U|{=}20$ and $1.05$ at $|U|{=}1000$ for $\eta{=}0.01$, so the estimate of $\tau$ dominates the choice of quantile. Two properties are worth noting.
A constant hypothesis then yields $|S|/|U|\approx\eta$ rather than exactly $0$, so the comparison for constancy is against $\eta$, not against zero. And $\tau$ is a property of the \emph{proposer}: it does not
shrink with $|U|$ and must be re-estimated whenever the proposer changes. With deterministic restrictions,
as in \cref{app:exp}, $\tau=0$, $\varepsilon=0$, and $S=\{j:\sigma_j>0\}$ exactly.

A stochastic $\mathrm{apply}$ also makes the certificate itself a random variable over the proposal
randomness. Any single run is valid---validity is per unit and conditions on the realised $r_j$, so it
needs neither determinism nor agreement among the $r_i$---but re-drawing $\mathrm{apply}$ and keeping
the run that certifies is selection on the outcome, and it voids the guarantee exactly as a hypothesis
chosen using those outcomes would. Best-of-$n$ sampling, regeneration-on-failure and retry loops are
all instances, and none arise when $\mathrm{apply}$ is a deterministic program. If several proposals
per unit are wanted, fix their number in advance and average the resulting e-values, which is valid
under arbitrary dependence (\cref{prop:agg}).

\subsection{A bounded variant}\label{app:div:bounded}
The routing rule of \cref{sec:decision} thresholds a point estimate. Where an explicit error level is wanted
on the routing itself, the binomial bounds of \cref{app:div:zero} give $[\hat\delta_{\mathrm{lo}},
\hat\delta_{\mathrm{hi}}]$ and hence $[D_{\mathrm{lo}},D_{\mathrm{hi}}]=[\hat\delta_{\mathrm{lo}}c,\,
\hat\delta_{\mathrm{hi}}c]$, so one may route as constant when $D_{\mathrm{hi}}\le\tau_{\mathrm{div}}$, as
divergent when $D_{\mathrm{lo}}>\tau_{\mathrm{div}}$, and return \emph{undecided} otherwise---with
\cref{eq:delta-bound} deciding the $\hat\delta=0$ case, where $D$ cannot be bounded from the data. This variant
is strictly more conservative: it never routes as constant a hypothesis the point-estimate rule would
route as divergent, and it converts an unresolvable case into a request for further support rather than a
default. The experiments in \cref{sec:exp} use the point-estimate rule as stated in \cref{sec:decision}.

\section{Data usage: splits, and coverage as refinement feedback}\label{app:feedback}

\textbf{Protocol assumption.} $H$ must be proposed on data disjoint from the data it is certified
against, so every $r_i$ is independent of the outcomes it is scored on; this is what licenses
conditioning on $r_j$ above, a hypothesis selected \emph{using} those outcomes carrying no
exchangeability guarantee \citep{markovic2017selective}.

$\mathrm{cover}$ is computed per unit, so a hypothesis that fails the gate returns more than a
verdict: it returns the \emph{partition} of $U$ into the units whose own restriction beat their own
baseline and the units where it did not. With the inputs $x_i$ attached, that is a labelled binary
problem---find the predicate separating the two groups---and its solution is a candidate scope
condition. A proposer is therefore told not merely that its hypothesis was rejected but which units
it was wrong about, which is the information needed to restate it.

\paragraph{Where the partition may be computed.} On the \emph{proposal} split, not the certification
split. The partition is a function of the outcomes, so handing a certification-split partition to the
proposer would select the refined hypothesis using the very data it is later scored on, violating the
protocol assumption above. Computed on the proposal split it uses only data the
proposer already holds, costs no certification data, and may be iterated freely. The same statistic
thus plays two roles distinguished only by which split it is read from: a \emph{guide} on the
proposal split, and the \emph{gate} of \cref{eq:cover} on the certification split. Over-refining
against the guide is then a power problem rather than a validity one---the predicate simply fails to
transfer, and the certification split rejects it.

\paragraph{A worked pair.} \cref{tab:feedback} plants a Simpson case whose carrying subgroup is a
covariate of the input---the complement mechanism holds only when the recent item falls in a fixed
$30\%$ of categories---together with the hypothesis obtained by adding the condition that the
coverage partition reveals.

\begin{table}[htbp]
\caption{A refinable Simpson case and its refinement. Means over $10$ seeds; verdicts unanimous.
\emph{Eq.~8 without cover} is the three-signal rule.}
\label{tab:feedback}
\footnotesize
\centering
\begin{tabular}{@{}llrrrrll@{}}
\toprule
Rule & Type & $\Esig$ & $\Emech$ & $D$ & cover & Eq.~8 without cover & with cover \\
\midrule
simpson\_cat & \emph{fake} & 124.1 & 28.4 & 7.74 & \textbf{0.29} & \emph{unit-specific} $\times$ & \textbf{reject} $\checkmark$ \\
complement\_cat & \emph{real} & 409.5 & 35.3 & 5.54 & \textbf{0.93} & \emph{unit-specific} $\checkmark$ & \emph{unit-specific} $\checkmark$ \\
\bottomrule
\end{tabular}
\end{table}

The pair is matched on every axis but one. Both clear significance by a wide margin, both clear $1/\alpha$ on the mechanism axis
($28.4$ and $35.3$), and their $D$ differs by less than the gap between two of the real rules of
\cref{tab:exp}. The three-signal rule therefore certifies \emph{both}, in $10/10$ seeds---this case,
unlike \emph{simpson}, is not caught even accidentally, since $\Emech=28.4$ clears $1/\alpha$. Only $\mathrm{cover}$ separates them, $0.29$ against $0.93$, and it is also what makes
the second rule available: partitioning the first rule's scope by whether the restriction worked
recovers exactly the category condition, and re-proposing with it yields a hypothesis that certifies.

\paragraph{When refinement cannot help.} The \emph{simpson} rule of \cref{tab:hyps} is deliberately
not refinable: its subgroup is keyed on the unit index rather than on any covariate, so no predicate
over $x$ recovers it and the coverage partition is uninformative. That is the case discussed in
\cref{sec:coverage} where the discriminating variable is not measured, and permanent rejection is the
correct outcome---a rule whose carrying subgroup cannot be named from the input is not a stable
predictor whatever its per-unit strength. The two cases are planted side by side so that the
difference is visible: identical failure mode, opposite refinability.

\section{Zero baselines}\label{app:degen}

A restriction can miss everything it is scored against. If no outcome in the pool
satisfies $r_j$ then the denominator of the corresponding e-value vanishes, and because the pool
contains unit $j$'s own outcome the numerator vanishes with it.

\paragraph{Why $1$ is the right value.} The ratio is $0/0$, never a division by a vanishing
baseline: $j$ belongs to the pool in both cases---$j\in T$ for $g_0^{(r_j)}$ and $j\in U$ for the row
mean---and the scores are nonnegative, so a zero denominator forces a zero numerator. Setting the
e-value to $1$ therefore assigns the unit the null value, and validity and the exactness of
\cref{prop:sig} are preserved, since the event is measurable with respect to the conditioning used
there.

\paragraph{Two deliberate consequences.} The unit \emph{stays in the average}, keeping its
denominator at $|U|$; discarding it instead removes exactly a term worth $1$, which inflates the
statistic, as quantified below. And since $1$ is not ${>}1$ the unit counts as \emph{uncovered} in
\cref{eq:cover}, which is right: a restriction that fails even on its own unit's outcome did not
hold for that unit.

\paragraph{Incidence.} The two halves of \cref{eq:degen} fire at very different rates, and only
because their pools differ in size. $g_0^{(r_j)}$ averages over $T$, so a restriction must miss every
one of $|T|$ outcomes; across the ten rules of \cref{sec:exp} that never happens, and the smallest
observed baseline corresponds to two hits in the pool. $\Emech$'s denominator is the row mean over
$U$, a sample some $33\times$ smaller, so the same restriction can miss all ${\approx}150$ in-scope
outcomes while a baseline over $T$ still finds a hit. That occurs for up to $15.7\%$ of a rule's
units. Contributing $1$ rather than discarding such
units is what keeps the average in \cref{eq:agg} over $|U|$; discarding them removes exactly the
terms worth $1$, inflating $\Emech$ by up to $15\%$ on the rules where its distance from $1/\alpha$
carries the argument.

\section{Extending each signal across banks}\label{app:banking}

Banking accumulates in-scope units from a stream of independent banks (\cref{sec:bank}). The four
signals reach a combined value by three different routes, and only one of them is the standard
e-value composition.

\paragraph{The e-values multiply.} Under either component null the per-bank value satisfies
$\Ex[E^{(b)}]\le1$ (\cref{prop:sig,prop:mech,prop:agg}), and the banks are independent of one
another and of a hypothesis fixed before the stream is opened, so $\Ex[E^{(m)}\mid\mathcal{F}_{m-1}]
\le1$ and the running product is a nonnegative supermartingale. Ville's inequality then bounds
$\Prob(\exists m:\;P^{(1:m)}\ge1/\alpha)\le\alpha$, which is what licenses reading the threshold
after every bank with no correction. Nothing here is specific to this certifier; it is the standard
composition of \citet{ramdas2023,grunwald2024safe}. Note the bound holds for \emph{any} stopping
rule, so the fact that ours reads only the accumulated $|U_b|$ is a convenience rather than a
requirement.

\paragraph{Coverage pools by support, exactly.} $\mathrm{cover}$ counts units rather than scoring
pairs, so the support-weighted mean of \cref{eq:bankcomb} is an identity, not an approximation:
$\sum_b |U_b|c^{(b)}\big/\sum_b|U_b|$ is precisely the fraction of all accumulated in-scope units
whose own restriction beat its own baseline. The one thing to note is that each unit is judged
against its \emph{own bank's} baseline $g_0$, which is the correct conformal choice, since
\cref{prop:sig} needs $j$ to lie in the pool defining the baseline and $U_b\subseteq B_m$.

\paragraph{Diversity must be re-pooled.} $D$ is read off the matrix, and a per-bank matrix contains
only the within-bank pairings: with $m$ banks one forms $\sum_b|U_b|^2$ entries instead of
$(\sum_b|U_b|)^2$, omitting every cross-bank pair. Those are exactly the pairs most likely to
disagree, so averaging per-bank $D$ values \emph{understates} diversity, and does so in the
dangerous direction---a diverse hypothesis is pushed toward the constant side of
$\tau_{\mathrm{div}}$. The effect is large: on the rules of \cref{sec:exp} split into eight banks,
the support-weighted mean returns $3.5$ against a pooled $7.3$ for \emph{complement} and $1.9$
against $7.3$ for \emph{simpson}, understatements of $50\%$ and $74\%$. It vanishes only where
$D=0$ already, constant restrictions having no cross-bank disagreement to lose.

The remedy costs nothing in the expensive step. $\mathrm{apply}$ has already been evaluated once per
in-scope unit and its output retained, so recomputing $D$ over the pooled $\mathcal{R}\times
\mathcal{Y}$ requires no further calls---only cheap scoring, which grows from $\sum_b|U_b|^2$ to
$(\sum_b|U_b|)^2$ and stays inside the $O(|U|^2)$ term of \cref{sec:cost}. Doing so recovers the
pooled value exactly, by construction. $D$ carries no Type-I guarantee in any case
(\cref{sec:diversity}), so recomputing it at each stopping point costs nothing in validity.

\end{document}